\definecolor{codegreen}{rgb}{0,0.6,0}
\definecolor{codegray}{rgb}{0.5,0.5,0.5}
\definecolor{codepurple}{rgb}{0.58,0,0.82}
\definecolor{backcolour}{rgb}{0.95,0.95,0.92}
\def\Tabref#1{Table~\ref{#1}}
\def\Figref#1{Figure~\ref{#1}}
\def\eqref#1{equation~\ref{#1}}
\def\Eqref#1{Equation~\ref{#1}}
\def\1{\bm{1}}
\def\vc{{\bm{c}}}
\def\vf{{\bm{f}}}
\def\mF{{\bm{F}}}
\def\mK{{\bm{K}}}
\def\mL{{\bm{L}}}
\def\mP{{\bm{P}}}
\def\mQ{{\bm{Q}}}
\DeclareMathAlphabet{\mathsfit}{\encodingdefault}{\sfdefault}{m}{sl}
\SetMathAlphabet{\mathsfit}{bold}{\encodingdefault}{\sfdefault}{bx}{n}
\DeclareMathOperator*{\argmin}{arg\,min}
\newtheorem{lemma}{Lemma}
\title{DISCO: accurate Discrete Scale Convolutions}
\def\etal{\emph{et al}\bmvaOneDot}
\begin{document}

\maketitle

\begin{abstract}
Scale is often seen as a given, disturbing factor in many vision tasks. When doing so it is one of the factors why we need more data during learning. In recent work scale equivariance was added to convolutional neural networks. It was shown to be effective for a range of tasks. We aim for accurate scale-equivariant convolutional neural networks (SE-CNNs) applicable for problems where high granularity of scale and small kernel sizes are required. Current SE-CNNs rely on weight sharing and kernel rescaling, the latter of which is accurate for integer scales only. To reach accurate scale equivariance, we derive general constraints under which scale-convolution remains equivariant to discrete rescaling. We find the exact solution for all cases where it exists, and compute the approximation for the rest. The discrete scale-convolution pays off, as demonstrated in a new state-of-the-art classification on MNIST-scale and on STL-10 in the supervised learning setting. With the same SE scheme, we also improve the computational effort of a scale-equivariant Siamese tracker on OTB-13.
\end{abstract}

\section{Introduction}
\label{sec:intro}

Scale is a natural attribute of every object, as basic property as location and appearance. And hence it is a factor in almost every task in computer vision. In image classification, global scale invariance plays an important role in achieving accurate results \cite{kanazawa2014locally}. In image segmentation, scale equivariance is important as the output map should scale proportionally to the input \cite{anderson1988adaptive}. And in object detection or object tracking, it is important to be scale-agnostic \cite{ren2015faster}, which implies the availability of both scale invariance as well as scale equivariance as the property of the method. Where scale invariance or equivariance is usually left as a property to learn in the training of these computer vision methods by providing a good variety of examples \cite{lin2017feature}, we aim for accurate scale analysis for the purpose of needing less data to learn from.

Scale of the object can be derived externally from the size of its silhouette, e.g \cite{wu2018size}, or internally from the scale of its details, e.g \cite{chang2009analysis}. External scale estimation requires the full object to be visible. It will easily fail when the object is occluded and/or when the object is amidst a cluttered background, for example for people in a crowd \cite{smeulders2013visual}, when proper detection is hard. In contrast, internal scale estimation is build on the scale of common details \cite{schneiderman2004object}, for example deriving the scale of a person from the scale of a sweater or a face. Where internal scale has better chances of being reliable, it poses heavier demands on the accuracy of assessment than external scale estimation. We focus on improvement of the accuracy of internal scale analysis.

\begin{figure}[t]
  \centering
    \includegraphics[width=0.9\linewidth]{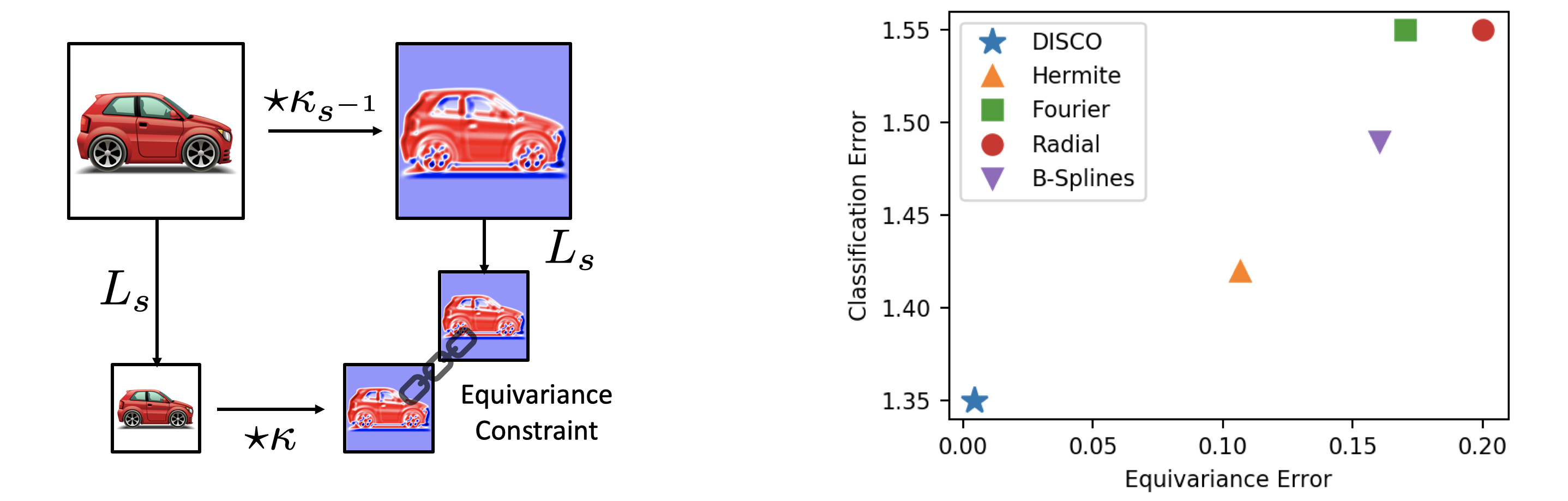}
  \caption{Left: the necessary constraint for scale-equivariance. When it is not satisfied an \textit{equivariance error} appears. Right: Equivariance error vs. Classification error for scale-equivariant models on MNIST-scale. DISCO achieves the lowest equivariance error and this leads to the best classification accuracy. Alongside DISCO, we test SESN models  with Hermite \cite{sosnovik2019scale}, Fourier \cite{zhu2019scale}, Radial \cite{ghosh2019scale} and B-Spline \cite{bekkers2019b} bases.}
  \label{fig:fig1intro}
\end{figure}

We focus on accurate scale analysis on the generally applicable scale-equivariant convolutional neural networks \cite{worrall2019deep,bekkers2019b,sosnovik2019scale}. A scale-equivariant network extends the equivariant property of conventional convolutions to the scale-translation group. It is achieved by rescaling the kernel basis and sharing weights between scales. While the weight sharing is defined by the structure of the group \cite{cohen2016group}, the proper way to rescale kernels is an open problem. In \cite{bekkers2019b,sosnovik2019scale}, the authors propose to rescale kernels in the continuous domain to project them later on a pixel grid. This permits the use of arbitrary scales, which is important to many application problems, but the  procedure may cause a significant equivariance error \cite{sosnovik2019scale}. Therefore, Worrall and Welling \cite{worrall2019deep} model rescaling as a dilation, which guarantees a low equivariance error at the expense of permitting only integer scale factors. Due to the continuous nature of observed scale in segmentation, tracking or classification alike, integer scale factors may not cover the  range of variations in the best possible way. 

In the paper, we show how the equivariance error affects the performance of SE-CNNs. We make the following contributions: 

\begin{itemize}
    \item From first principles we derive the best kernels, which minimize the equivariance error.
    \item We find the conditions when the solution exists and find a good approximation when it does not exist.
    \item We demonstrate that an SE-CNN with the proposed kernels outperforms recent SE-CNNs in classification and tracking in both accuracy and compute time. We set new state-of-the-art results on MNIST-scale and STL-10.
\end{itemize}

The proposed approach contains \cite{worrall2019deep} as a special case. Moreover, the proposed kernels can't be derived from \cite{sosnovik2019scale} and vice versa. The union of our approach and the approach presented in \cite{sosnovik2019scale} covers the whole set of possible SE-CNNs for a finite set of scales.
\section{Related Work}
\label{sec:related_work}

\vspace{-3mm}
\paragraph{Group Equivariant Networks.}
In recent years, various works on group-equivariant convolution neural networks have appeared. In majority, they consider the roto-translation group in 2D \cite{cohen2016group,cohen2016steerable,hoogeboom2018hexaconv,worrall2017harmonic,weiler2019general,weiler2018learning}, 
the roto-translation group in 3D \cite{worrall2018cubenet,kondor2018n,thomas2018tensor,cohen2017convolutional,weiler20183d}, the compactified rotation-scaling group in 2D \cite{henriques2017warped} and the rotation group 3D 
\cite{cohen2017convolutional,esteves2018learning,cohen2019gauge}. In \cite{cohen2018general,kondor2018generalization,lang2020wigner} the authors demonstrate how to build convolution networks equivariant to arbitrary compact groups. All these papers cover group-equivariant networks for compact groups. In this paper, we focus the scale-translation group which is an example of a non-compact group.

\vspace{-3mm}
\paragraph{Discrete Operators.}
Minimization of the discrepancies between the theoretical properties of continuous models and their discrete realizations has been studied for a variety of computer vision tasks. Lindeberg \cite{lindeberg1990scale,lindeberg2013scale} proposed a method for building a scale-space for discrete signals. The approach relied on the connection between the discretized version of the diffusion equation and the structure of images. While this method considered the scale symmetry of images and significantly improved computer vision models in the pre-deep-learning era, it is not directly applicable to our case of scale-equivariant convolutional networks. 

In \cite{diaconu2019learning}, Diaconu and Worrall demonstrate how to construct rotation-equivariant CNNs on the pixel grid for arbitrary rotations. The authors propose to learn the kernels which minimize the equivariance error of rotation-equivariant convolutional layers. The method relies on the properties of the rotation group and cannot be generalized to the scale-translation group. In this paper, we show how to minimize the equivariance error for scale-convolution without the use of extensive learning.

\vspace{-3mm}
\paragraph{Scale-Equivariant CNNs.}
An early work of \cite{kanazawa2014locally} introduced SI-ConvNet, a model where the input image is rescaled into a multi-scale pyramid. Alternatively, Xu \etal \cite{xu2014scale} proposed SiCNN, where a multi-scale representation is built from rescaling the network filters. While these modified convolutional networks significantly improve image classification, they require run-time interpolation. As a result they are several orders slower than standard CNNs. 

In \cite{sosnovik2019scale,bekkers2019b,zhu2019scale} the authors propose to parameterize the filters by a trainable linear combination of a pre-calculated, fixed multi-scale basis. Such a basis is defined in the continuous scale domain and projected on a pixel grid for the set of scale factors. The models do not involve interpolation during training nor inference. As a consequence, they operate within reasonable time. The continuous nature of the bases allows for the use of arbitrary scale factors, but it suffers from a reduced accuracy as the projection on the discrete grid causes an equivariance error.

Worral and Welling \cite{worrall2019deep} propose to model filter rescaling by dilation. This solves the equivariance error of the previous method at the price of permitting only integer scale factors. That makes the method less suited for object tracking, depth analysis and fine-grained image classification, where subtle changes in the image scale are important in the performance. Our approach combines the best of the both worlds as it guarantees a low equivariance error for arbitrary scale factors.

\vspace{-3mm}
\paragraph{Accurate Scale Analysis.}
Approaches based on feature pyramids are applied in many tasks \cite{han2017deep,lin2017feature,wang2020scale,qiao2020detectors}. Their implementation require a significant specialisation of the network architecture. Models based on direct scale regression \cite{ren2015faster,li2019siamrpn++,chen2020siamese} have proved to be accurate in scale analysis, but they rely on a complicated training procedure. Scale-equivariant networks require only a drop-in replacement of the standard convolutions by scale-convolutions, while keeping the training procedure unchanged \cite{worrall2019deep,bekkers2019b,sosnovik2019scale,Sosnovik_2021_WACV}. We appreciate the universal applicability of scale-equivariant networks. We focus on this particular use in our implementation while the method we set out in this paper will also apply to other ways of using scale in computer vision.

Existing models for scale-equivariant networks bring computational overhead, which significantly slows down the training and the inference. In this paper, we present scale-equivariant models which allow for the accurate analysis of scale with a minimum computational overhead while retaining the advantage of being an easy replacement of convolutional layers to improve.
\section{Method}
\label{sec:method}
\vspace{-3mm}
\paragraph{Equivariance.} 
A mapping $g$ is equivariant under a transformation $L$ if and only if there exists $L'$ such that $g \circ L  = L' \circ g$. If the mapping $L'$ is identity, then $g$ is invariant under transformation $L$. 

\vspace{-3mm}
\paragraph{Scale Transformations.}
Given a function $f: \mathbb{R}\rightarrow \mathbb{R}$ its scale transformation $L_s$ is defined by
\begin{equation}
    \label{eq:def_scale_transformation}
    L_s[f](t) = f(s^{-1}t), \quad \forall s >0
\end{equation}
We refer to cases with $s>1$ as up-scalings and to cases with $s < 1$ as down-scalings, where $L_{1/2}[f]$ stands for a function down-scaled by a factor of 2.

\vspace{-3mm}
\paragraph{The scale-translation group.}
We are interested in equivariance under the scale-translation group $H$ and its subgroups. It consists of the translations $t$ and scale transformations $s$ which preserve the position of the center. $H=\{(s, t)\}=S \rtimes T$ is a semi-direct product of a multiplicative group $S = (\mathbb{R}^+, +)$ and an additive group $T = (\mathbb{R}, +)$. For the multiplication of its elements we have $(s_2, t_2) \cdot (s_1, t_1) = (s_1s_2, s_2t_1 + t_2)$. Scale transformation of a function defined on group $H$ consists of a scale transformation of its spatial part as it is defined in the \Eqref{eq:def_scale_transformation} and a corresponding multiplicative transformation of its scale part. In other words
\begin{equation}
    \label{eq:def_scale_h}
    L_{\hat{s}}[f](s, t) = f(s\hat{s}^{-1}, \hat{s}^{-1}t)
\end{equation}

\subsection{Scale-Convolution}
A scale-convolution of $f$ and a kernel $\kappa$ both defined on scale $s$ and translation $t$ is given by: \cite{sosnovik2019scale}:
\begin{equation}
    \label{eq:sc_conv_def}
    [f \star_H \kappa](s, t) = \sum_{s'} [f(s', \cdot) \star \kappa_s(s^{-1}s', \cdot)](\cdot, t)
\end{equation}
where $\kappa_s$ stands for an $s$-times up-scaled kernel $\kappa$, $\star$ and $\star_H$ are convolution and scale-convolution. The exact way the up-scaling is performed depends on how the down-scaling of the input signal works. 

Scale-convolution is equivariant to transformations $L_{\hat{s}}$ from the group $H$, therefore the following holds true by definition:
\begin{equation}
    \label{eq:sc_conv_commutes}
    [L_{\hat{s}}[f] \star_H \kappa] = L_{\hat{s}}[f \star_H \kappa]
\end{equation}

Expanding the left-hand side of this relation by using \Eqref{eq:sc_conv_def}, choosing $s=1$ and replacing $s' \rightarrow s' \hat{s}$ we find:
\begin{equation}
    \label{eq:kernel_constraint_lhs}
    [L_{\hat{s}}[f] \star_H \kappa](s, t) = \sum_{s'} [L_{\hat{s}}[f(s', \cdot)] \star \kappa(\hat{s}s', \cdot)](\cdot, t)
\end{equation}
For the right-hand side we have:
\begin{equation}
    \label{eq:kernel_constraint_rhs}
   L_{\hat{s}}[f \star_H \kappa](s, t) = \sum_{s'} L_{\hat{s}}[f(s', \cdot) \star \kappa_{\hat{s}^{-1}}(\hat{s}s', \cdot)](\cdot, t)
\end{equation}
Equating the two sides and choosing $f$ to be zero on all scales but $s=1$, we obtain the equivariance constraint for the kernels 
\begin{equation}
    \label{eq:equi_constraint}
    L_s[f] \star \kappa = L_s[f \star \kappa_{s^{-1}}], \quad \forall f, s
\end{equation}
We have found that \textit{the mapping defined by \Eqref{eq:sc_conv_def} is scale-equivariant only if a kernel and its up-scaled versions satisfy \Eqref{eq:equi_constraint}}. Thus, it proves to be the necessary condition for scale-equivariant convolutions. In \cite{sosnovik2019scale,bekkers2019b,zhu2019scale} the opposite, sufficient condition was proved. As a whole it defines the relation between scale convolution and the constraints of its kernels.

\subsection{Exact Solution}
In the continuous domain, convolution is defined as an integral over the spatial coordinates. \cite{sosnovik2019scale,bekkers2019b,zhu2019scale} derives a solution for \Eqref{eq:equi_constraint}:
\begin{equation}
    \label{eq:continuous_solution}
    \kappa_s(t) = s^{-1}\kappa(s^{-1}t)
\end{equation}
However, when such kernels are calculated and projected on the pixel grid, a discrepancy between the left-hand side and the right-hand side of \Eqref{eq:equi_constraint} will appear. We refer to such inequality as the \textit{equivariance error}.

We aim at directly solving \Eqref{eq:equi_constraint} in the discrete domain. In general, for discrete signals down-scaling is a non-invertible operation. Thus $L_s$ is well-defined only for $s<1$. We start by solving \Eqref{eq:equi_constraint} for 1-dimensional discrete signals. We prove its generalization to the 2-dimensional case in supplementary materials. \Figref{fig:dilation} illustrates the approach. 

Let us consider a discrete signal $f$ represented as a vector $\vf$ of length $N_\text{in}$. It is down-scaled to length $N_\text{out} < N_\text{in}$ by $L_s$, which is represented as a rectangular interpolation matrix $\mL$ of size $N_\text{out} \times N_\text{in}$. A convolution with a kernel $\kappa$ is represented as a multiplication with a matrix $\mK$ of size $N_\text{out}\times N_\text{out}$, and with a kernel $\kappa_{s^{-1}}$ written as a matrix $\mK_{s^{-1}}$ of size $N_\text{in}\times N_\text{in}$. Then \Eqref{eq:equi_constraint} can be rewritten in matrix form as follows:
\begin{equation}
    \label{eq:equi_constrain_matrix}
    \mK\mL\vf = \mL\mK_{s^{-1}}\vf, \; \forall \vf
    \quad \Longleftrightarrow \quad
    \mK \mL = \mL \mK_{s^{-1}}
\end{equation}

Without loss of generality we assume circular boundary conditions. Then the matrix representations $\mK$ and $\mK_{s^{-1}}$ are both circulant and their eigenvectors are the column-vectors of the Discrete Fourier Transform $\mF$ \cite{bamieh2018discovering,henriques2014fast,chang2000fast}:
\begin{equation}
    \label{eq:kernel_eigend}
    \mK_{s^{-1}}=\mF \text{diag}(\mF\bm{\kappa}_{s^{-1}})\mF^*
\end{equation}
where $\bm{\kappa}_{s^{-1}}$ is a vector representation of $\kappa_{s^{-1}}$ padded with zeros. After substituting \Eqref{eq:kernel_eigend} into \Eqref{eq:equi_constrain_matrix} and multiplying both sides by $\mF$ from the right, we get:
\begin{equation}
    \label{eq:kernel_constraint_last_step}
    \mK \mL \mF = \mL \mF \text{diag}(\mF\bm{\kappa}_{s^{-1}})
\end{equation}
The left-hand side of the equation is obtained from $\mL \mF$ by multiplying it with a diagonal matrix from the right. Thus, each column of the matrix $\mK \mL \mF$ is proportional to the corresponding column of the matrix $\mL \mF$. We prove in supplementary materials that \textit{such a relation is possible if and only if the matrix $\mL$ performs a down-scaling by an integer scale factor}. 

When the requirement is satisfied, the solution with respect to $\bm{\kappa}_{s^{-1}}$ is the dilation of $\bm{\kappa}$ by factor $s$. Such a solution also known as the \textit{\`a trous algorithm} \cite{holschneider1990real}:
\begin{equation}
    \label{eq:exact_sol_1d}
    (\bm{\kappa}_{s^{-1}})_{is} = \sum_i \mF^*_{ij}
    (\mK\mL\mF)_{1j} / (\mL\mF)_{1j} = \bm{\kappa}_i
\end{equation}

\begin{figure*}[t]
  \centering
    \includegraphics[width=0.98\linewidth]{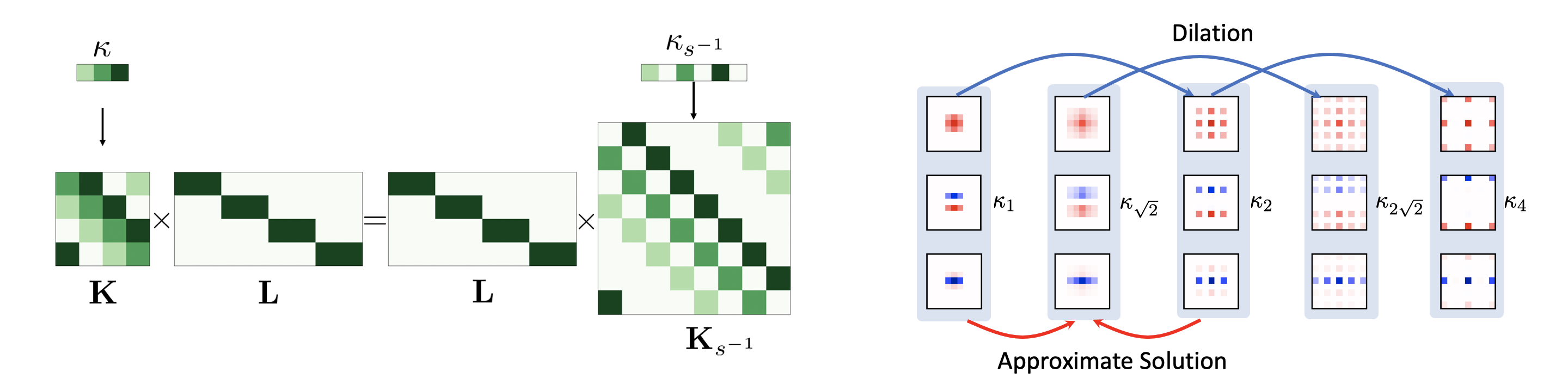}
  \caption{Left: a matrix representation of the 1-dimensional case of the equivariance constraint for $N_{\text{in}}=8$ and $N_{\text{out}}=4$. Right: a multi-scale kernel initialization. $\sqrt{2}$ is the smallest non-integer scale, for which the kernel is approximated by minimizing \Eqref{eq:approx_solution_kernel}, the rest of the kernels can be obtained with dilation.}
  \label{fig:dilation}
\end{figure*}

\subsection{Approximate solution}
Let us consider a scale-convolutional layer. One of its hyper-parameters is the set of scales it operates on. For the cases of non-integer scale factors any kernels will introduce an equivariance error into the network. Thus, it is reasonable to use integer scales as reference points and add intermediate scales to cover the required range of scale factors best. Let us choose a set of scales $\{1, \sqrt{2}, 2, 2\sqrt{2}, 4, 4\sqrt{2}, \dots\}$. The set of corresponding kernels is $\{\kappa_1, \kappa_{\sqrt{2}}, \kappa_2, \kappa_{2\sqrt{2}}, \dots\}$. As the smallest kernel is known, all kernels defined on integer scales can be calculated as its dilated versions. And, when kernel $\kappa_{\sqrt{2}}$ is defined, all intermediate kernels $\kappa_{2\sqrt{2}}, \kappa_{4\sqrt{2}}, \dots$ can be calculated by using dilation as well. Thus, the only kernel yet unknown is kernel $\kappa_{\sqrt{2}}$.

The kernel $\kappa_{\sqrt{2}}$ can be calculated as a minimizer of the equivariance error based on the \Eqref{eq:equi_constraint} as follows:
\begin{equation}
    \label{eq:approx_solution_kernel}
    \kappa_{\sqrt{2}} = 
    \argmin \mathbb{E}_{f} 
    \|L[f] \star \kappa_1 - L[f \star \kappa_{\sqrt{2}}]\|_F^2 
    + \|L[f] \star \kappa_{\sqrt{2}} - L[f \star \kappa_2]\|_F^2
\end{equation}
where $L = L_{1/\sqrt{2}}$ is a down-scaling by a factor $\sqrt{2}$. 

We demonstrate how to calculate approximate solution for the most general case in supplementary materials.

\subsection{Implementation}
To construct scale-equivariant convolution we parametrize the kernels as a linear combination of fixed multi-scale basis. The basis is then fixed and only corresponding coefficients are trained. The coefficients are shared for all scales.

We utilize the standard pixel basis on the smallest integer scale. The bases for the rest of the integer scales are computed as a dilation. The basis on the smallest non-integer scale is approximated by applying gradient descent to \Eqref{eq:approx_solution_kernel}. We note that it takes negligible time to compute all of the basis functions before training. See supplementary materials for more details. We refer to scale-convolutions with the proposed bases as Discrete Scale Convolutions or shortly DISCO. As DISCO kernels are sparse, they allow for lower computational complexity. 

\begin{table*}[t]
    \begin{center}
    \begin{tabular}{|l|c|cc|c|c|}
        \hline
        Model      &Basis & MNIST & MNIST+  & Equi. error& \# Params.\\ 
        \hline\hline
        CNN             & - & $2.02 \pm 0.07$  &$1.60 \pm 0.09$   & - & 495 K  \\
        SiCNN           & - & $2.02 \pm 0.14$  &$1.59 \pm 0.03$   & - & 497 K    \\
        SI-ConvNet      & - & $1.82 \pm 0.11$  &$1.59 \pm 0.10$  & - & 495 K\\
        SEVF            & - & $2.12 \pm 0.13$  &$1.81 \pm 0.09$  & - & 475 K\\
        DSS             & Dilation & $1.97 \pm 0.08$  &$1.57 \pm 0.09$  & 0.0 & 494 K\\
        SS-CNN          & Radial  & $1.84 \pm 0.10$  &$1.76 \pm 0.07$  & - & 494 K \\
        \hline
        SESN &Hermite   & $1.68 \pm 0.06$  &$1.42 \pm 0.07$  & 0.107 & 495 K  \\
        SESN &B-Spline  & $1.74 \pm 0.08$  &$1.49 \pm 0.05$  & 0.163 & 495 K  \\
        SESN &Fourier & $1.88 \pm 0.07$  &$1.55 \pm 0.07$  & 0.170 & 495 K  \\
        SESN &Radial  & $1.74 \pm 0.07$  &$1.55 \pm 0.10$  & 0.200 & 495 K  \\
        \hline
        DISCO & Discrete & $\mathbf{1.52 \pm 0.06}$ &$\mathbf{1.35 \pm 0.05}$  & 0.004 & 495 K  \\
        \hline  

    \end{tabular}
    \end{center}
    \caption{The classification error of various methods on the MNIST-scale dataset, lower is better. We test both the regime with and without data augmentation, where scaling data augmentation is denoted by ``$+$''. All results are reported as mean $\pm$ std over 6 different, fixed realizations of the dataset. The best results are \textbf{bold}.}
    \label{tab:mnist_scale_results}
\end{table*}

\section{Experiments}
\label{sec:experiments}
\subsection{Equivariance Error}

To quantitatively evaluate the equivariance error of DISCO versus other methods for scale-convolution \cite{sosnovik2019scale,zhu2019scale,bekkers2019b}, we follow the approach proposed in \cite{sosnovik2019scale}. In particular, we randomly sample images from the MNIST-Scale dataset \cite{sosnovik2019scale} and pass in through the scale-convolution layer. Then, the equivariance error is calculated as follows:
\begin{equation}
    \label{eq:equiv_error}
    \Delta = \sum_s 
    \| L_{s} \Phi(f) - \Phi(L_{s}f) \|_2^2 
    /
    \| L_{s} \Phi(f) \|_2^{2}
\end{equation}
where $\Phi$ is scale-convolution with weights initialized randomly.

The equivariance error for each model is reported in \Tabref{tab:mnist_scale_results} and in \Figref{fig:fig1intro}. Note that we can not directly compare against \cite{worrall2019deep} as it only permits integer scale factors. As can be seen, there exists a correlation between an equivariance error and classification accuracy. DISCO model attains the lowest equivariance error.

\subsection{Image Classification}
We conduct several experiments to compare various methods for scale analysis in image classification.
Alongside DISCO, we test SI-ConvNet \cite{kanazawa2014locally}, SS-CNN \cite{ghosh2019scale}, SiCNN \cite{xu2014scale}, SEVF \cite{marcos2018scale}, DSS \cite{worrall2019deep} and SESN \cite{sosnovik2019scale}. By relying on the code provided by the authors we additionally reimplement SESN models with other bases such as B-Splines \cite{bekkers2019b}, Fourier-Bessel Functions \cite{zhu2019scale} and Log-Radial Harmonics \cite{ghosh2019scale,naderi2020scale}. 

\paragraph{MNIST-scale.} 
Following \cite{sosnovik2019scale} we conduct experiments on the MNIST-scale dataset. The dataset consists of 6 splits, each of which contains 10,000 images for training, 2,000 for validation and 50,000 for testing. Each image is a randomly rescaled version of the original from MNIST \cite{lecun1998gradient}. The scaling factors are uniformly sampled from the range of $0.3-1.0$.

As a baseline model we use the SESN model, which holds the state-of-the-art result on this dataset. Both SESN and DISCO use the same set of scales in scale convolutions: $\{1, 2^{1/3}, 2^{2/3}, 2\}$ and are trained in exactly the same way. As can be seen from \Tabref{tab:mnist_scale_results}, our DISCO model outperforms other scale equivariant networks in accuracy and equivariance error and sets a new state-of-the-art result.

\begin{table}[t]
    \begin{center}
    \begin{tabular}{|l|cccccc|c|}
        \hline
        Model & WRN & SiCNN & SI-ConvNet & DSS & SS-CNN & SESN & DISCO \\
        \hline\hline
        Basis & - & - & - & Dilation & Radial & Hermite & Discrete \\
        Time, s & 10 & 110 & 55 & 40 & 15 & 165 & 50 \\
        Error & $11.48$ & $11.62$ & $12.48$ & $11.28$  & $25.47$ & $8.51$ & $\mathbf{8.07}$ \\
        \hline
    \end{tabular}
    \end{center}
    \caption{The classification error on STL-10. The best results are in \textbf{bold}. The average compute time per epoch is reported in seconds. DISCO sets a new state-of-the-art result in the supervised learning setting.}
    \label{tab:stl_results}
\end{table}

\begin{table}[t]
    \begin{center}
    \begin{tabular}{|cc|}
    \hline
    Equi. Error & STL-10 Error \\
    \hline\hline
    $0.240$ & $8.63$\\
    $0.082$ & $8.25$\\
    $\mathbf{0.003}$ & $\mathbf{8.07}$\\
    \hline
    \end{tabular}
    \end{center}
    \caption{Classification accuracy on STL-10 and the equivariance error for the DISCO model with different filters. The first and the second rows correspond to the cases when the basis for the intermediate scale is not optimized.}
    \label{tab:equi_error}
\end{table}

\paragraph{STL-10.} To demonstrate how accurate scale equivariance helps when the training data is limited, we conduct experiments on the STL-10 \cite{coates2011analysis} dataset. This dataset consists of just 8,000 training and 5,000 testing images, divided into 10 classes. Each image has a resolution of $96\times96$ pixels. 

As a baseline we use WideResNet \cite{Zagoruyko2016WRN} with 16 layers and a widening factor of 8. Scale-equivariant models are constructed according to \cite{sosnovik2019scale}. All models have the same number of parameters, the same set of scales $\{1, \sqrt{2}, 2\}$ and are trained for the same number of steps. For testing the disco model we use exactly the same setup as described by the authors of \cite{sosnovik2019scale}. All the models are trained on NVidia GTX 1080 Ti.

The models are trained for 1000 epochs using the SGD optimizer with a Nesterov momentum of $0.9$ and a weight decay of $5\cdot10^{-4}$. For DISCO, we increase the weight decay to $1\cdot10^{-4}$. Tuning weight decay for the other models did not bring any improvement. The learning rate is set to $0.1$ at the start and decreased by a factor of $0.2$ after the epochs 300, 400, 600 and 800. The batch size is set to 128. During training, we additionally augment the dataset with random crops, horizontal flips and cutout \cite{devries2017cutout}.

As can be seen from \Tabref{tab:stl_results}, the proposed DISCO model outperforms the other scale-equivariant networks and sets a new state-of-the-art result in the supervised learning setting. Moreover, DISCO is more than 3 times faster than the second-best SESN-model.

We additionally check how accuracy degrades if the basis for the scale of $\sqrt{2}$ is not correctly calculated. While the optimal basis is a minimizer of \Eqref{eq:approx_solution_kernel}, it is possible to stop the stop optimization procedure before convergence and generate then a non-optimal basis. We generated two non-optimal bases which correspond to different moments of the optimization procedure. We report the equivariance error and the classification error on the STL-10 dataset for DISCO with such bases functions in \Tabref{tab:equi_error}. It can be seen that lower equivariance errors correspond to lower classification errors.

\begin{table}[t]
    \begin{center}
    \begin{tabular}{|l|cccc|c|}
    \hline
    Model & SiamFC \cite{bertinetto2016fully} & TriSiam \cite{dong2018triplet} & SiamFC+\cite{zhang2019deeper} & SE-SiamFC+ \cite{Sosnovik_2021_WACV} & DISCO \\
    \hline\hline
    FPS & - & - & 56 & 14 & 28 \\
    AUC & $0.61$& $0.62$ & $0.67$ & $\textbf{0.68}$ & $\textbf{0.68}$ \\
    \hline
    \end{tabular}
    \end{center}
    \caption{Performance comparisons on the OTB-13 tracking benchmark. The best results are \textbf{bold}. We report the average number of framer per second (FPS) per sequence. Higher FPS and AUC are better.}
    \label{tab:otb_results}
\end{table}

\subsection{Tracking}
To test the ability of DISCO to deliver accurate scale estimation, we choose the task of visual object tracking. We take the recent SE-SiamFC+ \cite{zhang2019deeper} tracker and follow the recipe provided in \cite{Sosnovik_2021_WACV} to make it scale-equivariant. We employ the standard one-pass evaluation protocol to compare our method with conventional Siamese trackers and SE-SiamFC+ \cite{Sosnovik_2021_WACV} with a Hermite basis for the scale convolutions. The trackers are evaluated by the usual area-under-the-success-curve (AUC).

The scale-equivariant tracker with DISCO matches the performance of the state-of-the-art SE-SiamFC+, but twice faster as can be seen in Table \ref{tab:otb_results}. FPS is measured on Nvidia GTX 1080 Ti for all models. 

\subsection{Scene Geometry by Contrasting Scales}

We demonstrate the ability of DISCO to propagate scale information through the layers of the network, by presenting a simple approach for geometry estimation of a scene through the use of the intrinsic scale. This is possible because in the DISCO model, we can use high granularity of scale factors and process them more accurately and faster compared to other scale-equivariant models.

We construct a scale-equivariant network with DISCO layers. The weights are initialized from an ImageNet-pretrained network \cite{deng2009imagenet} following the approach described in \cite{Sosnovik_2021_WACV}. Next, we strip the classification head of the network and apply global spatial average-pooling. The resulting feature map thus has a dimension $B \times C \times S \times 1 \times 1$, where $B, C, S$ are the batch, channel and scale dimensions respectively. To decode the scale information, we sample the argmax along the scale dimension. Such a tensor has shape $B \times C$ where each element is a scalar that encodes the argmax for each of the objects on each of the channels. Then the tensor is passed to a shallow network, which produces a scale estimate for the input image. The feature extraction network followed by the shallow scale estimator network is denoted as $F_{\theta}$, where $\theta$ is the parameters of the shallow scale estimator, so we do not train the parameters of the feature extractor.

At the core of the method is the scale-contrastive learning algorithm. The model is trained to predict how much one image should be interpolated to match the other. Such an approach does not require any dedicated depth or scale labels. The algorithm is illustrated in Figure \ref{fig:contrast}. First, we sample randomly two scale factors $\gamma_1, \gamma_2 \sim U[0.5, 2.0]$ and apply interpolations $L_{\gamma_1}, L_{\gamma_2}$ to the image $\mathcal{I}$. The transformed images are fed into the network $F_{\theta}$, which predicts scale estimates $\tilde{\gamma}_1, \tilde{\gamma}_2$ (\Figref{fig:contrast}). Then, we minimize the following loss by using the Adam optimizer:
\begin{figure}[t]
  \centering
    \includegraphics[width=\linewidth]{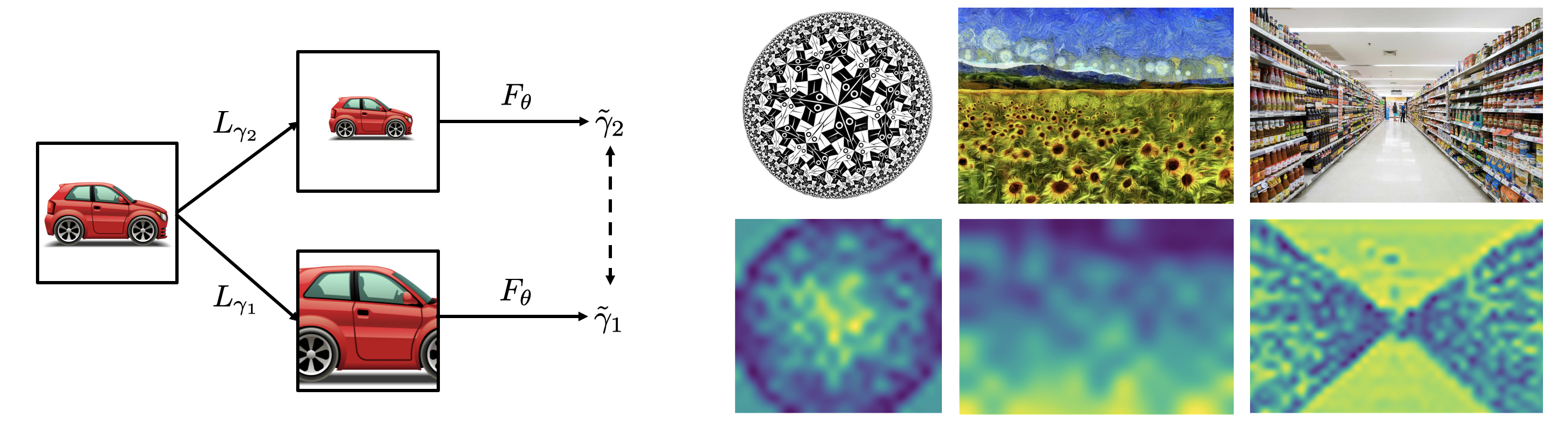}
  \caption{Left: the network is trained to predict the scale difference between an object and its resized version. Right: images and their scale fields produced by the DISCO model trained to contrast scales.}
  \label{fig:contrast}
\end{figure}

\begin{equation}
    \label{eq:geom}
    \mathcal{L}_\text{scale} = \mathbb{E}_{\mathcal{I}} 
    \Big[\frac{\gamma_2}{\gamma_1} - \frac{\tilde{\gamma}_2}{\tilde{\gamma}_1}\Big]^2 
    = \mathbb{E}_{\mathcal{I}} 
    \Big[\frac{\gamma_2}{\gamma_1} - \frac{F_{\theta}(L_{\gamma_2}(\mathcal{I}))}{F_{\theta}(L_{\gamma_1}(\mathcal{I}))}\Big]^2 
    \longrightarrow \min_\theta
\end{equation}

We train the model on the STL-10 dataset \cite{coates2011analysis} and evaluate it on random images found on the Internet. To infer the scene geometry of the image, we split the image into overlapping patches. For each of them we predict the scale. We provide qualitative results in Figure \ref{fig:contrast}. While the proposed methods was never trained on whole images, it captures the global geometry of the scenes, be it a road or a supermarket. 

We provide more detailed information for each of the experiments in supplementary materials. 

\section{Discussion}
\label{sec:discussion}

In this work, we demonstrate that the equivariance error affects the performance of equivariant networks. We introduce DISCO, a new class of kernels for scale-convolution, so the equivariance error is minimized. We develop a theory to derive an optimal rescaling to be used in DISCO and analyze under what conditions an optimal rescaling is possible and how to find a good approximation if these conditions do not hold. We also demonstrate how to efficiently incorporate DISCO into an existing scale-equivariant network.

We experimentally demonstrate that DISCO scale-equivariant networks outperform conventional and other scale-equivariant models, setting the new state-of-the-art on the MNIST-Scale and STL-10 datasets. In the visual object tracking experiment, DISCO matches the state-of-the-art performance of SE-SiamFC+ on OTB-13, however, works 2 times faster.

We suppose that the DISCO would be the most useful in problems, where an accurate scale analysis is required, such as multi-object tracking for autonomous vehicles, where the scale of objects can rapidly change due to the relative motion. We additionally want to highlight that the approach presented in this paper can be used to construct scale-equivariant self-attention models with reduced complexity \cite{romero2020group}.

\bibliography{egbib}

\appendix
\clearpage
\section{Proofs}

 We have shown that scale-convolution is indeed scale-equivariant only if the kernel $\kappa$ and its up-scaled version $\kappa_{s^{-1}}$ satisfy the following relation
 \begin{equation}
    \label{eq:supp_equi_constraint_func}
    L_s[f] \star \kappa = L_s[f \star \kappa_{s^{-1}}], \quad \forall f, s
\end{equation}
where $L_s$ is an operator of downscaling.

\subsection{Solutions in 1D}

Let us consider an operator of downscaling $L_s$, which is represented as a rectangular interpolation matrix $\mL$ of size $N_\text{out} \times N_\text{in}$. A convolution with a kernel $\kappa$ is represented as a multiplication with a matrix $\mK$ of size $N_\text{out}\times N_\text{out}$, and with a kernel $\kappa_{s^{-1}}$ written as a matrix $\mK_{s^{-1}}$ of size $N_\text{in}\times N_\text{in}$. The equivariance constraint with respect to $\kappa_{s^{-1}}$ is written as follows:

\begin{equation}
    \label{eq:supp_equi_constraint}
    \mK \mL = \mL \mK_{s^{-1}}
\end{equation}

\begin{lemma}
\Eqref{eq:supp_equi_constraint} has non-trivial solutions with respect to $\mK_{s^{-1}}$ only if $L$ performs downscaling by an integer factor.
\end{lemma}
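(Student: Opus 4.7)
The plan is to build on the reduction already performed in the body of the paper. Substituting the circulant diagonalization $\mK_{s^{-1}} = \mF\,\mathrm{diag}(\mF\bm{\kappa}_{s^{-1}})\mF^{*}$ into the constraint $\mK\mL = \mL\mK_{s^{-1}}$ and multiplying from the right by $\mF$ yields the identity $\mK(\mL\mF) = (\mL\mF)\,\mathrm{diag}(\mF\bm{\kappa}_{s^{-1}})$. This says that every non-zero column of $\mL\mF$ is a right eigenvector of the circulant matrix $\mK$. Since the eigenvectors of any circulant matrix of size $N_{\text{out}}$ are exactly the columns of the smaller Fourier matrix $\mF_{\text{out}}$, a non-trivial solution exists only if every non-zero column of $\mL\mF$ is proportional to some column of $\mF_{\text{out}}$.

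Next I would translate this spectral requirement back into a statement about $\mL$. The $j$-th column of $\mF$ is the sampled harmonic $\mathbf{e}_j = (e^{2\pi i jk/N_{\text{in}}})_{k=0}^{N_{\text{in}}-1}$, so the requirement becomes: for every $j$, the vector $\mL\mathbf{e}_j$ is either zero or a single discrete harmonic of length $N_{\text{out}}$ at some integer frequency $m(j)$. Writing $\hat L_k(j) := \sum_m L_{k,m}\,e^{2\pi i jm/N_{\text{in}}}$ for the DFT of the $k$-th row of $\mL$, this reads $\hat L_k(j) = c(j)\,e^{2\pi i m(j)k/N_{\text{out}}}$. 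I would then fix a frequency $j_0$ with $c(j_0)\neq 0$ to pin down $m(\cdot)$ uniquely, invert the DFT in $j$, and deduce that the $k$-th row of $\mL$ is a cyclic shift of the zeroth row by an integer offset $d_k = r k \bmod N_{\text{in}}$ with $r = N_{\text{in}}/N_{\text{out}}$. Because cyclic shifts are only defined at integer offsets, $r$ itself must be an integer, which is exactly the claim of the lemma.

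The main obstacle is making the middle step rigorous, in particular showing that the Fourier-side identity $\hat L_k(j) = c(j)\,e^{2\pi i m(j)k/N_{\text{out}}}$ really forces the rows of $\mL$ to be cyclic shifts of a common prototype by a common stride $r$. I plan to handle this in two stages: first, use injectivity of the DFT to conclude from the factorization that each row of $\mL$ differs from row zero by a pure character in Fourier space and hence is a cyclic shift; second, combine the relations at $k=1$ and $k=2$ to show that the shifts must depend linearly on $k$ with a single slope $r$, which must then equal $N_{\text{in}}/N_{\text{out}}$ by dimension counting. The degenerate case in which $c(j) \equiv 0$ makes $\mL\mF$ have no non-zero columns and therefore permits only the trivial solution $\bm{\kappa}_{s^{-1}} = 0$, which is excluded by the non-triviality hypothesis.
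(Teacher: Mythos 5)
Your opening reduction is sound and follows a genuinely different route from the paper's own proof: you work in the frequency domain, reading $\mK(\mL\mF) = (\mL\mF)\,\mathrm{diag}(\mF\bm{\kappa}_{s^{-1}})$ as the statement that every non-zero column of $\mL\mF$ is an eigenvector of $\mK$, whereas the appendix argues in the spatial domain by commuting both sides of the constraint with the cyclic-shift matrices $\mP_{\text{in}}, \mP_{\text{out}}$ and deducing that $\mL$ must be row-circulant with $N_{\text{out}}$ dividing $N_{\text{in}}$. Your spectral reading is the natural companion to the solution formula the paper derives immediately afterwards, and the first half is correct once you add the needed remark that the constraint must hold for \emph{every} trainable $\kappa$, so you may pick one whose DFT coefficients are pairwise distinct; only then are the eigenspaces of $\mK$ one-dimensional and the non-zero columns of $\mL\mF$ forced to be proportional to columns of $\mF_{\text{out}}$.

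The gap is in the second half, exactly where you flag the obstacle. The condition ``for every $j$, $\mL\mathbf{e}_j$ is either zero or proportional to a single harmonic of length $N_{\text{out}}$ at some frequency $m(j)$'' does not force the rows of $\mL$ to be cyclic shifts of one another by a common integer stride, because nothing pins down the assignment $j \mapsto m(j)$: it may be non-injective, and $c(j)$ may vanish on some frequencies. Concretely, any $\mL = \mF_{\text{out}}\mM\mF_{\text{in}}^{*}$ with $\mM$ having at most one non-zero entry per column satisfies $\mK\mL = \mL\mK_{s^{-1}}$ for every circulant $\mK$: set $(\mF_{\text{in}}\bm{\kappa}_{s^{-1}})_j$ equal to the eigenvalue of $\mK$ attached to the row supporting column $j$ of $\mM$, and arbitrary where that column vanishes. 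For $N_{\text{in}}=3$, $N_{\text{out}}=2$, a conjugation-symmetric choice of $\mM$ produces a real $2\times 3$ matrix $\mL$ admitting non-trivial real solutions even though $3/2$ is not an integer. So the implication your two-stage plan targets is false for arbitrary $\mL$ satisfying only the spectral condition; DFT injectivity plus combining $k=1,2$ cannot rescue it. To close the argument you must invoke the structural assumption that $\mL$ is a genuine uniform downscaling operator --- e.g., that its rows are equispaced translates of a common interpolation kernel, which forces $m(j)\equiv -j \pmod{N_{\text{out}}}$ and offsets $d_k = kN_{\text{in}}/N_{\text{out}}$, whence integrality. To be fair, the paper's proof leans on the same implicit restriction when it asserts that the bracketed differences of the $\mQ_{ij}$ must vanish, but as written your proposal does not supply the missing hypothesis either.
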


\begin{proof}
Let us consider $\mP_{\text{in}}$ and $\mP_{\text{out}}$, matrices of circular shift of rows of sizes $N_{\text{in}}\times N_{\text{in}}$ and $N_{\text{out}}\times N_{\text{out}}$ correspondingly. With no loss of generality we assume circular boundary conditions for convolutions. Thus, matrices $\mK, \mK_{s^{-1}}$ are circulant, and therefore $\mK = \mP_{\text{out}} \mK \mP_{\text{out}}^T$ and $\mK_{s^{-1}} = \mP_{\text{in}} \mK_{s^{-1}} \mP_{\text{in}}^T$ \cite{loehr2014advanced}. If we substitute it into \Eqref{eq:supp_equi_constraint} we have the following:
\begin{equation}
    \label{eq:supp_equi_proof}
    \mP_{\text{out}}^{i} \mK (\mP_{\text{out}}^T)^{i} \mL
    = \mL \mP_{\text{in}}^j \mK_{s^{-1}} (\mP_{\text{in}}^T)^j, \quad \forall i, j \in \mathbb{Z}
\end{equation}
If we multiply it from the left by $(\mP_{\text{out}}^T)^{i}$ and from the right by $\mP_{\text{in}}^{j}$ we get the following equation:
\begin{equation}
    \label{eq:supp_equi_proof_2}
    \mK (\mP_{\text{out}}^T)^{i} \mL \mP_{\text{in}}^{j}
    = (\mP_{\text{out}}^T)^{i}\mL \mP_{\text{in}}^j \mK_{s^{-1}}
\end{equation}
We can now multiply \Eqref{eq:supp_equi_proof_2} by a coefficient $\alpha_{ij}$ and then the following holds true:
\begin{equation}
    \label{eq:supp_equi_proof_3}
    \mK 
    \sum_{i=1}^{N_{\text{out}}} 
    \sum_{j=1}^{N_{\text{in}}} 
    \alpha_{ij}\mQ_{ij}
    =     
    \sum_{i=1}^{N_{\text{out}}} 
    \sum_{j=1}^{N_{\text{in}}} 
    \alpha_{ij}\mQ_{ij} \mK_{s^{-1}}, \forall \alpha_{ij}
\end{equation}
where $\mQ_{ij}=(\mP_{\text{out}}^T)^{i} \mL \mP_{\text{in}}^{j}$. \Eqref{eq:supp_equi_proof_3} holds true for all $\alpha_{ij}$. Which gives us the following system of equations:
\begin{equation}
    \label{eq:supp_system}
    \begin{cases}
    \mK (\mQ_{00}-\mQ_{N_{\text{out}}N_{\text{in}}})
    = (\mQ_{00}-\mQ_{N_{\text{out}}N_{\text{in}}}) \mK_{s^{-1}} \\
    \mK (\mQ_{01}-\mQ_{N_{\text{out}},N_{\text{in}}+1})
    = (\mQ_{01}-\mQ_{N_{\text{out}},N_{\text{in}}+1}) \mK_{s^{-1}} \\
    \dots \\
    \mK (\mQ_{10}-\mQ_{N_{\text{out}}+1,N_{\text{in}}})
    = (\mQ_{10}-\mQ_{N_{\text{out}}+1,N_{\text{in}}}) \mK_{s^{-1}} \\
    \dots \\
    \end{cases}
\end{equation}
Which has non-trivial solutions if the expressions in all brackets are equal to zero. Thus, $\mQ_{00} = \mQ_{N_{\text{out}}N_{\text{in}}}$. In other words, $\mL$ is a row-circulant matrix and $N_{\text{in}}$ is divisible by $N_{\text{out}}$. Therefore, the downscaling is performed by an integer factor $N_{\text{in}}/N_{\text{out}}$
\end{proof}

In order to obtain the solution of \Eqref{eq:supp_equi_constraint} we represent convolutional matrices by using their eigendecompositions. 

\begin{equation}
\label{eq:supp_fdf}
    \begin{split}
&\mK = \mF_{\text{out}} \text{diag}(\mF_{\text{out}} \bm{\kappa}) \mF^*_{\text{out}} \\
&\mK_{s^{-1}} = \mF_{\text{in}} \text{diag}(\mF_{\text{in}} \bm{\kappa}_{s^{-1}}) \mF_{\text{in}}^*
    \end{split}
\end{equation}
where $\mF_{\text{in}},\mF_{\text{out}}$ are matrices of the Discrete Fourier Transform of appropriate sizes and $\bm{\kappa}, \bm{\kappa}_{s^{-1}}$ are vector representations of convolutional kernels. After substituting the second part of \Eqref{eq:supp_fdf} into \Eqref{eq:supp_equi_constraint} we obtain:
\begin{equation}
\label{eq:supp_kl_lk_decomposed}
\mK \mL = \mL \mF_{\text{in}} \text{diag}(\mF_{\text{in}} \bm{\kappa}_{s^{-1}}) \mF_{\text{in}}^*
\end{equation}
We then multiply both sides of the equation with $\mF_{\text{in}}$ from the right.
\begin{equation}
\label{eq:supp_kl_lk_decomposed2}
(\mK \mL \mF_{\text{in}})_{ij} = \sum_{k}(\mL \mF_{\text{in}} )_{ik}\text{diag}(\mF_{\text{in}} \bm{\kappa}_{s^{-1}})_{kj} 
\end{equation}
As the left hand side is per-column proportional to $\mL \mF_{\text{in}}$, we can calculate the solution just by using the first row of each matrix. 
\begin{equation}
\label{eq:supp_solution}
(\mF_{\text{in}} \bm{\kappa}_{s^{-1}})_j
= \frac{(\mK \mL \mF_{\text{in}})_{1j}}{(\mL \mF_{\text{in}})_{1j}}
\end{equation}
The first row of $\mF_{\text{in}}$ consists of ones so as the first row of $\mL \mF_{\text{in}}$. Additionally, $(\mK \mL \mF_{\text{in}})_{1j} = s^{-1}[\bm{\kappa}, \bm{\kappa}]_j$. As the discrete Fourier image of the solution is a scaled concatenated image of the source, the solution is just a dilation of the original kernel \cite{loehr2014advanced}.

\subsection{Solutions in 2D}
We are interested in solving \Eqref{eq:supp_equi_constraint_func} with respect to $\kappa_{s^{-1}}$ for any set of $\kappa$'s which forms a complete basis in the space of square matrices of a certain, fixed size. If the solution exists for any basis, then it exists for a basis of 2-dimensional separable kernels. As the rank of the set of solutions is less or equal to the rank of the initial basis, the solution is separable as well. Let us consider an image $\mF$ of size $N_\text{in} \times N_\text{in}$. Taking into account that its rescaling is a separable operation, the matrix form of \Eqref{eq:supp_equi_constraint_func} is:
\begin{equation}
    \label{eq:equi_constraint_2d}
    \mK' \mL \mF \mL^T \mK^T =   \mL \mK'_{s^{-1}}\mF \mK_{s^{-1}}^T \mL^T, \quad \forall \mF
\end{equation}
where $\mK'$ and $\mK$ are matrix representations of 1-dimensional components of a separable kernel. As \Eqref{eq:equi_constraint_2d} holds true for all images, it satisfies $\mF = \vf \vc^T$ and $\mF = \vc \vf^T$ where $\vc$ is a vector of constants and $\vf$ is an arbitrary vector. After substituting these functions into \Eqref{eq:equi_constraint_2d} it degenerates into a system of two independent equations up to a multiplication constant:
\begin{equation}
    \label{eq:equi_constrain_sep}
    \begin{cases}
     \mK \mL = \mL \mK_{s^{-1}} \\
     \mK' \mL = \mL \mK'_{s^{-1}}
    \end{cases}
\end{equation}
Thus, if a solution exists for 2-dimensional discrete signals it also exists for the 1-dimensional case.

\section{Implementation Details}

\subsection{Scale Convolution}

Let us consider a scale-convolutional layer defined on scales $\{1, \sqrt{2}, 2, 2\sqrt{2}, 4, \}$. The kernel on the smallest scale is of size $3 \times 3$. As it was noted, as soon as the kernel on the intermediate $\sqrt{2}$ scale is defined, all other kernel can be calculated via dilation. 

In scale-convolutional layer the kernels $\kappa$ are parametrized as follows:
\begin{equation}
    \label{eq:impl_kernel}
    \kappa_s = \sum_{j} \psi_{s,j} w_j
\end{equation}
where $\psi_{s,j}$ is a $j$-th basis function defined on scale $s$, and $w_j$ is the corresponding trainable coefficient. 

As the basis is fixed during the training, it needs to be defined \textit{a priori}. On the smallest scale all basis functions are just elements of the standard basis, i.e. if $\psi_{1,i}$ is the i-th basis function for the $3 \times 3$ filters on the first scale, then $\psi_{1,0}$ is a $3\times3$ matrix where the only non-zero element is a $1$ in the top-left corner, and $\psi_{1,4}$ is a $3\times3$ matrix with $1$ in the center. On the next integer scale $2$, the basis is obtained according to Equation 12 of the main paper and computed as a dilation of $\psi_{1, i}$. To obtain non-integer scale bases we start by approximating the first intermediate $\sqrt{2}$ scale basis $\psi_{\sqrt{2},j}$ functions by minimizing the following objective function:

\begin{equation}
\label{eq:impl_trainnable}
\|L[f] \star \psi_{1,j} - L[f \star \psi_{\sqrt{2},j}]\|_F^2
+ \|L[f] \star \psi_{\sqrt{2},j} - L[f \star \psi_{2,j}]\|_F^2
\end{equation}
where $f$ is a random sample from $\mathcal{N}(0, 1)$ and $L$ is an operation of downsampling by a factor of $\sqrt{2}$ by using bicubic interpolation. The basis the scale $\{2\sqrt{2} \}$ is calculated as a dilation of the approximated $\sqrt{2}$ basis. See Figure \ref{fig:scene_geom_supp} for more details.

After all basis functions are calculated, the basis is packed into a tensor of size: $$\texttt{num\_functions} \times \texttt{num\_scales} \times \texttt{height} \times \texttt{width}$$
and used for runtime kernel calculations with the algorithm provided by \cite{sosnovik2019scale}.

\begin{figure*}[t!]
  \centering
    \includegraphics[width=\linewidth]{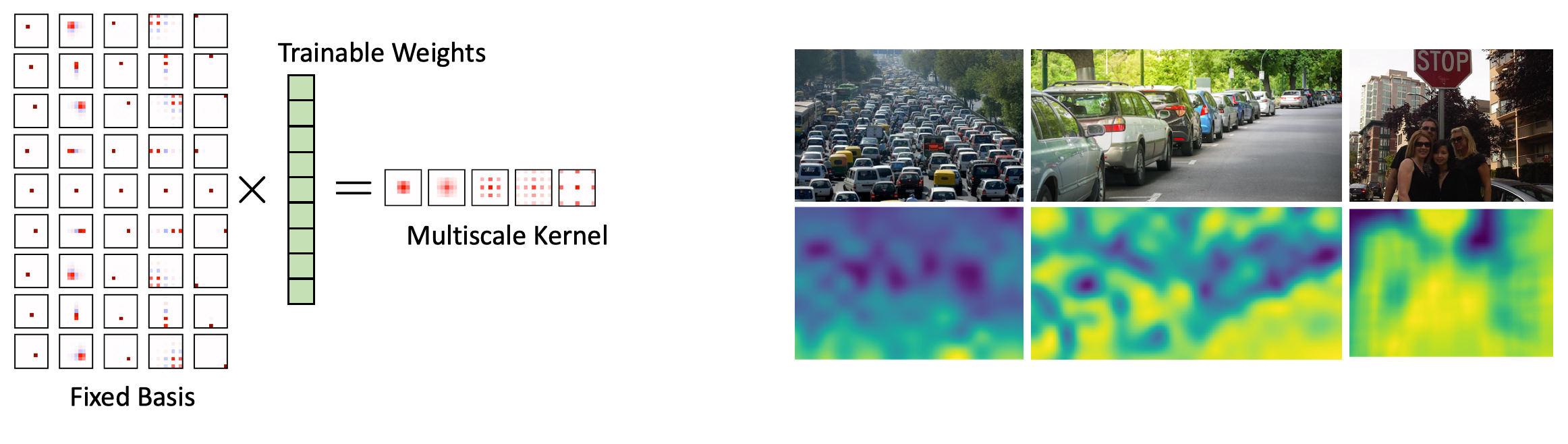}
  \caption{Left: kernels are computed via multiplying a fixed multi-scale basis with trainable weights. Right: images and their scale fields produced by the DISCO model trained to contrast scales.}
  \label{fig:scene_geom_supp}
\end{figure*}

\subsection{Computational Complexity}
Let us consider a scale-convolutional layer with a set of $N_s$ scales with step $\sigma > 1$. The smallest kernel size is $W \times W$. The computational complexity for calculating the output for one spatial position for the state-of-the-art method from \cite{sosnovik2019scale} can be estimated as follows:

\begin{equation}
    \label{eq:big_o_sesn}
        O(\textit{SESN}) 
        \sim O(W^2(1 + \sigma^2 + \dots + \sigma^{2N_s-2})) 
        \sim O\Big(W^2 \frac{\sigma^{2N_s} - 1}{\sigma^{2} - 1}\Big) 
        \sim O(W^2 \sigma^{2N_s}) 
\end{equation}

In contrast, for DISCO we arrive the following complexity:
\begin{equation}
    \label{eq:big_o_disco}
    \begin{split}
        O(\textit{DISCO}) \sim O(N_s W^2)
    \end{split}
\end{equation}
Thus, where the state of the art SESN convolution grow exponentially in computational complexity with the number of scales, DISCO allow for linear growth. 

When using a scale step of $\sqrt{2}$ we achieve a speedup of:
\begin{equation}
    \label{eq:speepdup}
    \frac{O(SESN)}{O(DISCO)} \sim \frac{2^{N_s}}{N_s}.
\end{equation}
The main reason for the acceleration is that in SESN the filters are dense, as they are rescaled in the continuous domain by using Equation 4 of the main paper, while DISCO filters are sparse as the rescaling is performed by using dilation for the majority of scales. The actual speedup depends on the particular implementation of scale-convolution with such kernels. The current implementation is limited by the functionality of modern deep learning software which is not optimized for sparse filters of a big spatial extend.

\subsection{General Solution}
While in many models which consider scale the scale-step is a root of some integer number, it is possible to build a DISCO model with arbitrary scale-steps. 
Let us consider a scale-convolutional layer defined on scales $\{s_0, as_0, a^2s_0, \dots a^Ns_0 \}$ where $a>1$. In order to construct kernels for such a layer it is first required to calculate a basis $\{\psi_{s_0, j}, \psi_{as_0, j}, \dots \psi_{a^Ns_0, j} \}$ for all $j$. The basis can be calculated as a minimizer of the following objective:
\begin{equation}
        \mathcal{L}(\psi_{s_0, j}, \psi_{as_0, j}, \dots \psi_{a^N s_0, j} )
        = \mathbb{E}_f \sum\limits_{\substack{k,l=0 \\ k > l}}^{k,l=N} 
        \|L_{a^{l-k}}[f]\star \psi_{a^{l}s_0,j} - L_{a^{l-k}}[f \star \psi_{a^{k}s_0,j} ]\|_F^2
\end{equation}

\section{Experiments}

\begin{table}[t]
    \begin{center}
    \begin{tabular}{|l|ccc|}
    \hline\hline
    Interpolation & Nearest & Bilinear & Bicubic \\
    \hline
    Error & $1.36\pm0.06$ & $1.37\pm0.05$ & $1.35\pm0.05$ \\ 
    \hline
    \end{tabular}
    \end{center}
    \caption{Classification accuracy on MNIST-scale for different interpolation methods which are used for approximate basis calculations.}
    \label{tab:interpolation}
\end{table}

\subsection{MNIST-scale}
As a baseline model we use the SESN model \cite{sosnovik2019scale}. It consists of 3 convolutional and 2 fully-connected layers. Each layer has filters of size $7\times7$. We keep the number of parameters the same for all SESN models and for DISCO. The main difference between the SESN and DISCO models is in the basis for scale-convolutions. We also discovered that average-pooling works slightly better for the DISCO, while for all other methods it either has no effect or worsens the performance. Both SESN and DISCO use the same set of scales in scale convolutions: $\{1, 2^{1/3}, 2^{2/3}, 2\}$

All models are trained with the Adam optimizer \cite{kingma2014adam} for 60 epochs with a batch size of 128. We set the initial learning rate at $0.01$ and divide it by 10 after 20 and once more after 40 epochs. We conduct the experiments with 2 different settings: without data augmentation and with scaling augmentation. We run the experiments on 6 different realizations of the MNIST-scale. We report the mean $\pm$ standard deviation over these runs.

We found in our experiments that the interpolation method which is used to calculate a basis by using \eqref{eq:impl_trainnable} does not affect the final solution. The relative mean squared error between bases is less than percent. Moreover, DISCO model demonstrates almost the same results on MNIST-scale while various interpolation methods are used. See Table \ref{tab:interpolation} for more results.

\subsection{STL-10}
As a baseline we use WideResNet \cite{Zagoruyko2016WRN} with 16 layers and a widening factor of 8. Scale-equivariant models are constructed according to \cite{sosnovik2019scale}. All models have the same number of parameters. The scale factors in the scale convolutions are $\{1, \sqrt{2}, 2\}$.

The models are trained for 1000 epochs using the SGD optimizer with a Nesterov momentum of $0.9$ and a weight decay of $5\cdot10^{-4}$. For DISCO, we increase the weight decay to $1\cdot10^{-4}$. Tuning weight decay for the other models did not bring any improvement. The learning rate is set to $0.1$ at the start and decreased by a factor of $0.2$ after the epochs 300, 400, 600 and 800. The batch size is set to 128. During training, we additionally augment the dataset with random crops, horizontal flips and cutout \cite{devries2017cutout}.

\subsection{Scene Geometry by Contrasting Scales}

For clarity we provide a PyTorch pseudo-code for DISCO scene geometry estimation (Listing \ref{listing}). We utilize scale-equivariant ResNet as a backbone feature extractor. The produced feature map is reduced in a spatial domain. Then the \texttt{argmax} along the scale dimension is extracted and passed to the scale MLP regressor to produce a scale estimate. Additional qualitative results are presented in Figure \ref{fig:scene_geom_supp}.

\begin{lstlisting}[language=Python, label=listing, caption=PyTorch pseudo-code for DISCO scene geometry estimation.]
import torch.nn as nn 
import SE_ResNet

class ScaleEstimator(nn.Module):
    def __init__(self):
        super().__init__()
        self.backbone = SE_ResNet(pretrained=True)
        self.regressor = nn.Sequetial(
            nn.Linear(512, 256), 
            nn.ReLU(),
            nn.Linear(256, 1),
            nn.ReLU()
        )

    def forward(self, x):
        # x.shape = B, 3, 64, 64
        y = self.backbone(x)
        # y.shape = B, 512, 9, 1, 1
        y = y.mean(-1).mean(-1)
        # y.shape = B, 512, 9
        y = y.argmax(-1)
        scale = self.regressor(y)
        return scale
        
\end{lstlisting}

\end{document}